\newcommand{\tbf}[1]{\textbf{#1}\xspace}
\newcommand{\epp}{\ensuremath{\operatorname{EPPT}}\xspace}
\newcommand{\bmc}{\ensuremath{\operatorname{BMCP}}\xspace}
\newcommand{\rdr}{\ensuremath{\mathsf{RAS}}\xspace}
\newcommand{\sgreedy}{\ensuremath{\mathsf{Greedy}}\xspace}
\newcommand{\srandom}{\ensuremath{\mathsf{Uniform}}\xspace}
\newcounter{int}
\newcommand{\citen}[1] {\setcounter{int}{0}\@for\tmp:=#1\do{%
\ifnum \value{int}>0; \fi%
\setcounter{int}{1}%
\citeauthor{\tmp} \shortcite{\tmp}}}
\newcommand{\citenp}[1]{\setcounter{int}{0}\@for\tmp:=#1\do{%
\ifnum \value{int}>0; \fi%
\setcounter{int}{1}%
\citeauthor{\tmp}, \citeyear{\tmp}}}
\begin{document}

\title{Equity Promotion in Public Transportation}
\author {
    Anik Pramanik\textsuperscript{\rm 1}\equalcontrib,
    Pan Xu\textsuperscript{\rm 1}\equalcontrib,
    Yifan Xu\textsuperscript{\rm 2}\equalcontrib
}
\affiliations {
    \textsuperscript{\rm 1} Department of Computer Science, New Jersey Institute of Technology, Newark, USA\\
    \textsuperscript{\rm 2} School of Cyber Science and Engineering, Southeast University, Nanjing, China\\
    ap2645@njit.edu,
    pxu@njit.edu,
    xyf@seu.edu.cn
}

\maketitle

\begin{abstract}
There are many news articles reporting the obstacles confronting poverty-stricken households in access to public transits. These barriers create a great deal of inconveniences  for these impoverished families and more importantly, they contribute a lot of social inequalities. A typical approach addressing the issue is to build more transport infrastructure to offer more opportunities to access the public transits especially for those deprived communities. Examples include adding more bus lines connecting needy residents to railways systems and extending existing bus lines to areas with low socioeconomic status. Recently, a new strategy is proposed, which is to harness the ubiquitous ride-hailing services to connect disadvantaged households with the nearest public transportations. Compared with the former infrastructure-based solution, the ride-hailing-based strategy enjoys a few exclusive benefits such as higher effectiveness and more flexibility.

In this paper, we propose an optimization model to study how to integrate the  two approaches together for equity-promotion purposes. Specifically, we aim to design a strategy of allocating a given limited budget to different candidate programs such that  the overall social equity is maximized, which is defined as the minimum covering ratio among all pre-specified protected groups of households (based on race, income, etc.). We have designed a linear-programming (LP) based rounding algorithm, which proves to achieve an optimal  approximation ratio of $1-1/e$. Additionally, we test our algorithm against a few baselines on real data assembled by outsourcing multiple public datasets collected in the city of Chicago. Experimental results confirm our theoretical predictions and demonstrate the effectiveness of our LP-based strategy in promoting social equity, especially when the budget is insufficient.  
\end{abstract}




\section{Introduction}
We consider the last-mile problem in public transportation. There are roughly 20\% of households that are at or below the federal  poverty line lack access to a car, and the percentage can get as high as 33\% among the low-income African and Latino population~\cite{berube2006socioeconomic}. For these impoverished families: On the one hand, they rely on the 
public transportation as the only travel option for daily activities such commuting to work, shopping, \etc; on the other hand, many of them live relatively far away from public transits and need to walk a long distance to even the nearest bus and/or metro stops. The difficulty in access to public transport creates a great deal of trouble and inconvenience for these poverty-stricken families and contributes to many social inequalities {~\cite{degood2016can}.} One recent example is the radically unequal access to COVID-19 vaccines in the early stage of rollout in 2021, where it had been widely reported that vaccination rate of Black People was greatly lagging behind that of White counterparts, and one of the main causes was the obstacle in access to public health providers{~\cite{wp-1,nyt-2,los-1,wp-2}.}

The traditional way to improve the access to public transits includes creating more bus lines connecting needy residents to railway systems, extending existing bus lines to deprived communities, and increasing the frequency and service hours of transit providers, to name a few. Recently, local officials are weighing the option of integrating the popular ride-hailing services offered by Uber and Lyft to the existing toolkit to combat the last-mile problem~\cite{jin2019uber,kong2020does}. One example is to establish an auxiliary subsidized program to allocate reimbursed ride-hailing trips to needy households to help them access to the near transit stations. Compared with the traditional solutions, ride-hailing services enjoy benefits such as more flexibility and mobility (can be requested via mobile apps upon needed) and a lower cost in general.  Additionally, ride-hailing-based program can help effectively target needy households, especially when they sparsely and remotely scatter over a large area where traditional means are either cost prohibitive or ineffective. 

There are a few challenging issues  in the last-mile problem, including  how to craft eligibility to identify the set of qualified needy households, how to set subsidizing guidelines for ride-hailing services for people under different spatial and financial conditions, how to design routes and schedules of new bus lines, and how to split  limited budget to different approaches. For policy-related issues, existing literatures have already proposed a few answers. For example,~\citet{degood2016can} outlined eligibility spatial criteria among other financial factors as follows: walking distance to the nearest bus lines should fall between $[0.25, 3.5]$ miles and to the nearest railway station between $[0.5, 3.5]$ miles.\footnote{Here upper bounds are set to ensure a high utilization of limited funds, which can benefit more people that live relatively closer to the existing public system than those far away.} They also proposed detailed suggestions on setting tailored subsidizing policy for needy households  based on the income level and the household size.

In this paper, we consider a non-policy-related question of optimizing budget allocation: Given a limited budget, a set of established qualified households, and a collection of well-defined promotion programs  (\eg a set of new bus lines with full information regarding the operational cost, routes and schedules, a comprehensive subsidizing framework of ride-hailing trips), how to design a best strategy of allocating  the limited fund to different proposed programs to maximize the social equity? There are several different metrics for the social equity; one example is the min coverage ratio among all pre-specified protected groups based on sensitive information like race and ethnicity, which is commonly used in fairness- and equity-related promotions{~\cite{ma2020group,nanda2020balancing,Hosseini2022ClassFI}. }

We formalize our problem, called  \textbf{Equity Promotion in Public Transportation} (\epp), as follows. Suppose we have a set $I$ of qualified needy households, where each $i$ has the following three kinds of information: (1) spatial attributes such as the living location, walking distances to the nearest bus lines and the nearest railway station; (2) financial factors including income level and household size, and (3) basic demographics such as race and ethnicity. Assume we are offered a total fund $B$ for a  given time window (\eg one year or one quarter). We have a collection $J$ of candidate bus lines to open, where each bus line $j$ is characterized as an operational cost $c_j$ and a subset $S_j \subseteq I$ denoting the set of target households covered,\footnote{Here we view bus lines with the same route but different schedules and/or frequencies as distinct  since they can incur different costs and cover varied-sized sets of needy households.} and where $\{S_j| j \in J\}$ can be possibly overlapping. Each target household $i \in I$ can be covered either by a candidate bus line $j \in J$ if $S_j \ni i$ or by being enrolled into the ride-hailing-based welfare program, where the latter will induce a cost $c_i$ determined by $i$'s spatial and financial traits. Suppose we have a collection of protected groups $\cG=\{g\}$, where each group $g\subseteq I$ is a subset of target households sharing specific demographics (\eg Black, White, and Asian). Note that all information of $\{I, J, B, \{c_i, c_j, S_j| i \in I, j\in J\}, \cG=\{g\} \}$ is given as part of the input. In order to better expose the technical challenges in our problem, we present two versions of a budget allocation strategy, as follows.

\xhdr{Deterministic version of a budget allocation strategy}. For each bus line $j \in J$, let  $x_j=1$ indicate that $j$ is to open; for each qualified household $i \in I$, let  $x_i=1$ indicate that $i$ should be enrolled into the ride-hailing program. Thus, a deterministic budget allocation plan can be captured as a binary vector $\x \in \{0,1\}^{|J|+|I|}$, and it is called \emph{feasible} if $\sum_{j \in J} c_j x_j+\sum_{i \in I} c_i x_i \le B$, \ie the total cost is within the given budget.  For each needy household $i \in I$, let $y_i:=\min(1, x_i+\sum_{j: S_j \ni i}x_j)$, which indicates if $i$ is covered ($y_i=1$) or not.  Observe that $y_i=0$ ($i$ is not covered)  iff $x_i=x_j=0$ for all $j$ with $S_j \ni i$, which means $i$ is not added to the ride-hailing program, either none of bus lines covering $i$ is open. The coverage ratio of group $g$ on $\x$ can be then expressed as $\sum_{i \in g}y_i/|g|$, where $|g|$ refers to the cardinality of group $g$. The resulting objective of the social equity can be computed as $\min_{g \in \cG} \sum_{i \in g}y_i/|g|$. 

\xhdr{Randomized version of a budget allocation strategy}. Let $\Phi=\{\phi_k|1 \le k \le K\}$ be a collection of all possible feasible deterministic strategies as described above.\footnote{Note that the number $K$ of all possible feasible strategies is upper bounded by $2^{|I|+|J| }$.} A randomized budget allocation strategy $\ALG$ then can be captured as a distribution $\cD$ over $\Phi$ such that $\ALG$ will sample and run a strategy from $\Phi$ following distribution $\cD$. For each household $i \in I$, let random variable $Y_i=1$ indicate that $i$ is covered in $\ALG$ (and $Y_i=0$ otherwise). For each group $g \in \cG$, the \emph{expected} coverage ratio under $\ALG$ then can be expressed as $\E[\sum_{i \in g} Y_i/|g|]$, where the expectation is taken over $\cD$, the random choice taken by $\ALG$. The final objective of the social equity can be computed as $\min_{g \in \cG} \E[\sum_{i \in g}Y_i/|g|]$. 

\xhdr{Deterministic vs.\ randomized strategies}. (1) Note that any deterministic strategy can be cast as a special randomized one. This suggests that by expanding the focus from deterministic to randomized, we (the algorithm designer) can \emph{potentially} land
at a better strategy in terms of achieving a larger objective value, while how much extra value we can gain  really depends on the specific problem. As Example~\ref{exam:a} shows, some randomized strategy can far outperform the optimal deterministic on some instances of our model. (2) Randomized strategies make more sense in the context of promoting equity in public transits compared with deterministic ones. In most practical scenarios, local governments can secure only a limited budget that can cover a small portion of impoverished households. In this case, any deterministic budget-allocation strategy could benefit only a small fixed set of targets and inevitably leave far more vulnerable households unaffected, which could cause more social injustices. In contrast, randomized strategies can potentially impact a far larger deprived population. Also, they are easy to implement in practice, \eg  by randomly sampling a set of targets every time and then recruiting them to the ride-hailing program, and/or by alternatively operating different bus lines based on monthly/quarterly schedules.

\section{Preliminaries and Main Contributions}
  Throughout this paper, we denote $[n]=\{1,2,\ldots, n\}$ for a generic positive integer $n$; we use \OPT to denote both of an optimal strategy and the corresponding performance, and the same for \ALG, which denotes both a generic algorithm and its performance.

  \xhdr{Approximation ratio} (AR). For NP-hard combinatorial optimization problems, a powerful framework is called \emph{approximation algorithms}, where we aim to design an efficient algorithm (polynomial running time) with a guaranteed performance from the optimal. Consider a maximization problem like \epp as studied here. Let $\ALG$ be an approximation algorithm (possibly randomized) and $\OPT$ denote an optimal algorithm with no running-time constraint and its performance. We say $\ALG$ achieves an approximation ratio of $\rho \in [0,1]$ if $\E[\ALG] \ge \rho \cdot \OPT$ for all possible input instances.
  
  \xhdr{Connection to Budgeted Maximum Coverage Problem (BMCP)}. To the best of our knowledge, BMCP is the model closest to ours, which is a generalization of the classical Maximum Coverage Problem. The basic setting is as follows. We have a ground set of $I$ and a collection $\cS=\{S_j| j \in J\}$ of subsets of $I$, where each subset $S_j$ indexed by $j \in J$  is associated with a cost $c_j>0$. Suppose we are given a total budget $B$, and we aim to identify a sub-collection, denoted by $J' \subseteq J$, to maximize the total coverage $|\cup_{j \in J'} S_j|$ subject to the budget constraint, \ie $\sum_{j \in J'} c_j \le B$. BMCP and its related variants are well studied in theoretical computer science community~\cite{cohen2008generalized,khuller1999budgeted}, and most of them can be solved via a greedy-based framework with an optimal approximation ratio of $1-1/\sfe \sim 0.632$~\cite{feige1998threshold}. \emph{Note that \bmc can be cast as a special case of our problem \epp  when it has only one single protected group  and offers no any ride-hailing-based program, which suggests that \epp admits no approx-ratio better than $1-1/\sfe$ unless P=NP}.    The lemma below further highlights the difference between BMCP and \epp.

\begin{lemma}\label{lem:intro-1}
For \bmc, any optimal randomized strategy can be realized by a deterministic one. In contrast, there exists some instance of \epp where an optimal randomized strategy can strictly beat an optimal deterministic.
 \end{lemma}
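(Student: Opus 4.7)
The plan is to handle the two halves of the lemma with quite different arguments: an averaging argument for \bmc, and an explicit separating instance for \epp.

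For the first part, I would use that the objective in \bmc is an \emph{expectation-linear} quantity over deterministic strategies. Concretely, any randomized strategy \ALG is a distribution $\cD$ over feasible deterministic strategies $\Phi=\{\phi_k\}_{k \in [K]}$, and the objective value of \bmc on a draw $\phi_k$ equals $\sum_{i \in I} \mathbf{1}[i \text{ covered by }\phi_k]$. Since expectation commutes with the sum, $\E_{\phi \sim \cD}[\text{Coverage}(\phi)]=\sum_{k\in[K]} \Pr[\phi_k]\cdot \text{Coverage}(\phi_k)$, which is a convex combination of the deterministic values. Hence at least one $\phi_k$ in the support of $\cD$ achieves a value at least $\E[\ALG]$. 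Applying this to an optimal randomized strategy yields a deterministic strategy that is feasible and at least as good, so we can ``derandomize'' without loss.

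For the second part, I would construct a trivial but decisive instance. Take $|I|=2$ with households $i_1,i_2$, let $J=\emptyset$ (no candidate bus lines), set the ride-hailing cost $c_{i_1}=c_{i_2}=1$, and fix budget $B=1$. Define two singleton protected groups $\cG=\{\{i_1\},\{i_2\}\}$. Any feasible deterministic strategy can enroll at most one of $\{i_1,i_2\}$ into the ride-hailing program, so it covers exactly one group entirely while leaving the other uncovered; thus $\min_g \sum_{i\in g}y_i/|g|=0$ for every deterministic plan. Now consider the randomized strategy that enrolls $i_1$ with probability $1/2$ and $i_2$ with probability $1/2$; it is a distribution over feasible plans, and each group has expected coverage ratio $1/2$, giving objective value $1/2>0$.

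The only subtlety is the difference in the \emph{objective functions} driving the two halves: \bmc maximizes a quantity that is linear in per-household coverage indicators, so $\E[\min(\cdot)]$ never arises and derandomization by averaging is free; \epp takes a \emph{minimum} over groups of expected coverage ratios, and the $\min$ operator is concave, so Jensen-type considerations mean $\min_g \E[\cdot]$ can strictly exceed $\E[\min_g(\cdot)]$, which is exactly what the separating example exploits. I do not expect any step to be truly hard; the main care point is making sure the counterexample is stated as a bona fide instance of \epp (including the trivially satisfied inputs like $J=\emptyset$ and singleton groups) so that the separation is formally within the model defined in the preliminaries.
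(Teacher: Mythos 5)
Your proposal is correct and follows essentially the same route as the paper: the first half is the same averaging argument showing $\E[\OPT_R]$ is a convex combination of deterministic coverages and hence bounded by $\OPT_D$, and the second half is precisely the paper's Example~\ref{exam:a} (two singleton groups, $J=\emptyset$, unit costs, $B=1$, randomize over which household to enroll). No gaps; nothing further needed.
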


The lemma above suggests that expanding the set of strategy choices from deterministic to randomized will offer no extra power for \bmc but will possibly do for \epp. The difference is mainly due to the fact that the objective function of \bmc is linear (thus, linearity of expectation can be applied), whereas that of \epp is nonlinear.\footnote{Actually, the first part of statement of Lemma~\ref{lem:intro-1} can be generalized to any optimization problem where the objective function can be expressed as a linear function of decision variables.}

\begin{proof}
We prove the first claim for $\bmc$. Consider a given instance of $\bmc$ with a ground set of $I$ and a given budget $B$. Let $\OPT_R$ be an optimal randomized strategy that is characterized by the following distribution over the collection $\Phi=\{\phi_k | k \in [K]\}$ of all feasible deterministic strategies: $\OPT_R$ will run $\phi_k$ with probability $q_k$ with $\sum_{k \in [K]} q_k=1$. For each element $i \in I$ and each deterministic strategy $k$, let $Y_{ik}=1$ indicate that $i$ is covered in $\phi_k$ and $Y_{ik}=0$ otherwise. Thus, $\sum_{i \in I} Y_{ik}$ captures the coverage of strategy $\phi_k$.
Observe that
\[
\E[\OPT_R]=\sum_{k \in [K]}\bp{\sum_{i \in I} Y_{ik}} \cdot q_k \le \max_{k \in [K]}\bp{\sum_{i \in I} Y_{ik}}=\OPT_D,
\]
where $\OPT_D$ denotes the performance of an optimal deterministic. The claim for \epp can be seen on Example~\ref{exam:a}.
\end{proof}

\begin{example}\label{exam:a}
Consider a toy example of \epp, where $\cG=\{g_1, g_2\}$ with $g_1=\{a\}$, $g_2=\{b\}$ and $I=\{a,b\}$. Let $J=\emptyset$ (no bus lines to open) and covering households $a$ and $b$  via the ride-hailing program each incur a unit cost that is equal to the budget, \ie $c_a=c_b=B=1$. We can verify that for any deterministic strategy can achieve an equity being 0 since it can cover only one household in one group; thus, $\OPT_D=0$. Consider such a randomized strategy that is to cover only $a$ or $b$ via the ride-hailing program  
each with probability $1/2$. Then it achieves an \emph{expected} equity of $1/2$, which suggests that $\E[\OPT_R] \ge 1/2>\OPT_D=0$. 
\end{example}



  

\subsection{Main Contributions and Techniques}
 In this paper, we consider  a technical issue of promoting the social equity by optimizing the budget allocation to different candidate welfare programs assisting deprived households in access to the public transits. In all technical sections, we assume WLOG  that the cost associated with each program is no more than $1$ with $B \ge 1$ by scaling down all costs such that $\max_{\ell \in I \cup J}c_\ell=1$. Our main technical result is stated as follows.

\begin{theorem}\label{thm:main-1}[Section~\ref{sec:alg}]
There exists a linear-programming (LP) based  rounding strategy  (\rdr) that achieves an optimal approximation ratio of $1-1/\sfe$ for \epp, which uses a budget no more than $B$ in expectation and no more than $B+1$ for any realization.
\end{theorem}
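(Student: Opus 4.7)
The plan is to formulate an LP relaxation of \epp, solve it to optimality, and round the fractional solution through a knapsack-aware randomized scheme. I would introduce variables $x_\ell \in [0,1]$ for every candidate program $\ell \in I \cup J$, fractional coverage variables $y_i \in [0,1]$ for every $i \in I$, and a scalar $z \ge 0$ capturing the min-group coverage ratio. The relaxation maximizes $z$ subject to $y_i \le x_i + \sum_{j \colon S_j \ni i} x_j$ and $y_i \le 1$ for all $i$, $|g|\,z \le \sum_{i \in g} y_i$ for every $g \in \cG$, and the budget $\sum_\ell c_\ell x_\ell \le B$. Because every feasible deterministic plan is LP-feasible and every randomized strategy is a convex combination of such plans, an optimal LP value $z^*$ satisfies $z^* \ge \OPT$.

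With the fractional solution $(x^*, y^*, z^*)$, the algorithm \rdr would round $x^*$ into a binary vector $X$ via a dependent scheme (for example, a knapsack-respecting variant of Srinivasan-style pipage rounding). This scheme preserves marginals ($\Pr[X_\ell = 1] = x_\ell^*$), yields a negatively-correlated output, and deterministically enforces $\sum_\ell c_\ell X_\ell \le B + \max_\ell c_\ell \le B + 1$ using the scaling $\max_\ell c_\ell \le 1$. By linearity, $\E[\sum_\ell c_\ell X_\ell] = \sum_\ell c_\ell x_\ell^* \le B$, which gives the expected-budget half of the statement directly.

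For the approximation ratio, for each household $i$ let $L_i = \{i\} \cup \{j \in J \colon S_j \ni i\}$. Negative correlation (or outright independence, in the warm-up analysis) gives
\[
\Pr[Y_i = 0] \;\le\; \prod_{\ell \in L_i}(1 - x_\ell^*) \;\le\; \exp\!\Bigl(-\sum_{\ell \in L_i} x_\ell^*\Bigr).
\]
Combining this with the LP constraint $y_i^* \le \min\bigl(1, \sum_{\ell \in L_i} x_\ell^*\bigr)$ and with the concave inequality $1 - \mathrm{e}^{-t} \ge (1 - 1/\sfe)\,t$, valid on $t \in [0,1]$, yields $\Pr[Y_i = 1] \ge (1 - 1/\sfe)\,y_i^*$. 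Averaging over each group $g \in \cG$ and then taking the minimum gives $\min_{g} \E\bigl[\tfrac{1}{|g|}\sum_{i \in g} Y_i\bigr] \ge (1 - 1/\sfe)\,z^* \ge (1 - 1/\sfe)\,\OPT$. Optimality of the $1-1/\sfe$ ratio is inherited from the \bmc hardness noted in the paragraph preceding Lemma~\ref{lem:intro-1}.

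The principal obstacle is reconciling the worst-case budget cap $B+1$ with the product-form coverage bound. Fully independent rounding trivially delivers the product inequality but can overshoot the budget by an arbitrary amount, while naive post-truncation breaks the marginals and the $(1-1/\sfe)$ accounting. The delicate step is therefore verifying that a single rounding scheme simultaneously guarantees (i) marginal preservation, (ii) enough negative correlation on every subset $L_i$ to uphold the product upper bound on $\Pr[Y_i = 0]$, and (iii) the knapsack bound $\sum_\ell c_\ell X_\ell \le B+1$ deterministically; the remaining arithmetic on the concavity inequality is routine.
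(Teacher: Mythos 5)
Your architecture coincides with the paper's: the same LP relaxation (the paper's LP~\eqref{obj-1}, with ride-hailing enrollment folded in as one-household virtual bus lines), the same argument that the LP value upper-bounds the optimal randomized strategy (Lemma~\ref{lem:lp}), the same marginal-preserving, negatively-correlated rounding with at most an additive-$1$ budget overflow, the same $1-\sfe^{-t}\ge(1-1/\sfe)\min(1,t)$ case analysis per household, and the same appeal to \bmc hardness for optimality of the ratio.

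The gap is precisely the step you flag at the end and then do not carry out: you assert the existence of a rounding scheme satisfying (i) marginal preservation, (ii) negative correlation on each $L_i$, and (iii) a deterministic cap $\sum_\ell c_\ell X_\ell\le B+1$, citing ``a knapsack-respecting variant of Srinivasan-style pipage rounding.'' Off-the-shelf dependent rounding preserves \emph{unweighted} sums along its paired updates, so it does not by itself give (iii); constructing and verifying the weighted variant is exactly the paper's technical contribution (Algorithm~\ref{alg:rdr} and Lemma~\ref{lem:pr}). The fix is short but must be done: pick two fractional coordinates $X_p,X_q$ and move them by $(+\alpha,\,-c_p\alpha/c_q)$ with probability $\beta/(\alpha+\beta)$ and by $(-\beta,\,+c_p\beta/c_q)$ with probability $\alpha/(\alpha+\beta)$, where $\alpha,\beta$ are the largest admissible step sizes. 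This preserves each marginal, keeps the weighted cost $c_pX_p+c_qX_q$ \emph{exactly} invariant in every two-variable step (so only the single final one-variable Bernoulli rounding can add cost, at most $\max_\ell c_\ell\le 1$, giving the deterministic $B+1$ bound), and is negatively correlated because each paired step satisfies $\E[(1-X'_p)(1-X'_q)\mid X_p,X_q]=(1-X_p)(1-X_q)-\tfrac{c_p}{c_q}\alpha\beta\le(1-X_p)(1-X_q)$, while untouched coordinates are unchanged; iterating yields $\E[\prod_{\ell\in L_i}(1-X_\ell)]\le\prod_{\ell\in L_i}(1-x^*_\ell)$, which is the product bound your ratio analysis needs. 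Until this construction and its verification are supplied, your proof is incomplete at its core step; with them, the rest of your argument goes through as written.
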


\noindent\tbf{Remarks on Theorem~\ref{thm:main-1}}.  (1) Note that our problem \epp captures \bmc as a special case; thus, no algorithm (including randomized) can achieve an approx-ratio better than $1-1/\sfe$, which suggests the optimality $\rdr$ in terms of approx-ratio. (2) As stated in Theorem~\ref{thm:main-1}, our strategy is feasible in budget under expectation, though no always. This is inconsequential and can be mitigated technically. Note that the total absolute overflow of the budget is $1$, which represents the max cost associated with any candidate programs. By decomposing or splitting a given program into several copies (\eg replacing a bus line operating 12 hours a day by 12 bus lines sharing the same route but operating one hour only a day), we can significantly  reduce the max cost incurred by any candidate welfare programs. Another note is that our input setting is positioned on a given time window (\eg one  quarter or a year). Thus, we expect to run our randomized strategy repeatedly stretching over a long period covering multiple units of the time window considered here. The fact that our strategy is feasible in the budget under expectation suggests its feasibility in \emph{the long run}, making it somewhat acceptable in practice.

We implement \rdr and compare it against several natural baselines on real data that is assembled by integrating multiple real datasets from different public sources.   Experimental results confirm our theoretical predictions and demonstrate the effectiveness of our LP-based randomized strategy (\rdr) in promoting social equity in public transportation, especially when the budget is small. This highlights the practical value of our proposed strategy in improving social equity since the lack of fund for the public infrastructure is widely reported and is prevalent across the USA; see, \eg \cite{li-1,li-2,li-3}. Furthermore, the results suggest the great complementary value brought by the ride-hailing-based program to the traditional bus-line-based when the budget is insufficient.





\xhdr{Technical challenges}. The main technical challenge in our problem is partially due to the non-linear objective function of maximizing the overall social equity that is quantified as the minimum (expected) coverage ratio among all protected groups.  One the one hand, as shown in  Example~\ref{exam:a}, randomized strategies can be substantially more powerful than deterministic ones on the objective studied here; on the other hand, the design and analysis of a randomized algorithm are  more technically challenging in general compared with that of a deterministic, requiring to craft and add appropriate random bits based on the  specific  input structure. Note that the introduction of ride-hailing welfare program does not add any new \emph{technical} challenges to the problem,\footnote{We can actually view covering a household $i$ by the ride-hailing program alternatively as a virtual new bus line $j'$ with cost $c_{j'}=c_i$ and coverage $S_{j'}=\{i\}$.} though it could mean a lot in practice in terms of promoting the social equity; see experimental results of the impact on the equity brought by the ride-hailing program in Section~\ref{sec:exp-res}.

One of the main technical contributions in the paper is to propose a \emph{weighted} version of {Dependent Rounding} (DR), where the original version of DR was introduced by~\citet{gandhi2006dependent} that is to round a vector in a fractional bipartite-matching polytope to an integral that is required to lie in the same polytope and satisfy a few properties. One specific feature imposed on the final rounded solution in~\cite{gandhi2006dependent} is that the \emph{unweighted} sum of variables incident to every vertex should remain invariant before and after rounding. In our case, we need to ensure the \emph{weighted} sum of all variables has a gap as small as possible between the original fractional and the final rounded integral solutions. To solve this challenge, we craft a more delicate weighted version of DR; see Algorithm~\ref{alg:rdr}. 
 




\xhdr{Other related work}. There are a few works that have considered resource allocation in online setting under different contexts; see, \eg equity promotion in vaccination~\cite{Xu2022EquityPI}, ride-hailing resource allocation~\cite{lesmana2019}, online resource-allocation problems with limited choices in the long-chain design~\cite{DBLP:journals/mansci/AsadpourWZ20}, and online fair division problem~\cite{aleksandrov2015online}.
Another research line has studied matching policy design and simulation in integrating the ride-hailing and public transits~\cite{Basu2018AutomatedMV,Boone2018MobilityAA,Shen2018IntegratingSA,Stiglic2018EnhancingUM,Yan2019IntegratingRS}.
However, these studies haven't considered the objective of promoting the overall social equity as here.
\section{An LP-based Rounding Algorithm}\label{sec:alg}
Recall that the introduction of ride-hailing-based program brings no technical challenge to \epp; see discussions in ``\tbf{Technical challenges}.'' For the ease of exposition, we consider an input instance of \epp with the bus-lines-based  program only by treating to cover each household via ride-hailing trips as a special one-one bus line covering the specific household only. Consider an optimal randomized strategy \OPT.\footnote{Throughout this paper, we use the two terms ``strategy'' and ``algorithm''  interchangeably.} For each target household $i \in I$, let $y_i$ be the probability that $i$ is covered in \OPT. For each candidate bus line $j \in J$, let $x_j$ be the probability that $j$ is set to open in \OPT. Consider the Linear Program (LP) below.

\begin{alignat}{2}
\max &~~\min_{g \in \cG}\bp{ \sum_{i \in g} y_j/|g|},&&  \label{obj-1} \\
 & \sum_{j \in J} c_j x_j \le B,  &&~~  \label{cons:b} \\ 
 &  y_i \le \min \bp{1, \sum_{j: S_j \ni i} x_j}, &&~~ \forall i \in I \label{cons:i} \\ 
  &0 \le  y_i, x_j \le 1 && ~~i \in I, j \in J. \label{cons:e}
\end{alignat}

 \begin{lemma}\label{lem:lp}
The optimal value of \LP~\eqref{obj-1} is a valid upper bound for the performance of an optimal randomized strategy.
\end{lemma}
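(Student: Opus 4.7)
The plan is a standard LP-relaxation argument: I will show that any randomized strategy, and in particular \OPT, induces a feasible solution to \LP~\eqref{obj-1} whose LP objective value equals the strategy's performance. Since the LP optimum is at least as large as the value of any feasible solution, this immediately yields the desired upper bound.

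Concretely, let \OPT be an optimal randomized strategy, viewed as a distribution $\cD$ over the collection $\Phi=\{\phi_k\}$ of feasible deterministic strategies. For each $j \in J$ let $X_j \in \{0,1\}$ be the random indicator that bus line $j$ is opened under \OPT, and for each $i \in I$ let $Y_i \in \{0,1\}$ be the random indicator that household $i$ is covered. Set $x_j := \Pr[X_j=1]=\E[X_j]$ and $y_i := \Pr[Y_i=1]=\E[Y_i]$, where the expectation is over $\cD$. I will verify that $(\x,\y)$ is feasible for \LP~\eqref{obj-1}.

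First I would check the budget constraint~\eqref{cons:b}: because every realization $\phi_k \in \Phi$ is a feasible deterministic strategy with $\sum_{j} c_j x_j(\phi_k) \le B$, linearity of expectation gives $\sum_j c_j x_j = \E\bigl[\sum_j c_j X_j\bigr] \le B$. Next, for constraint~\eqref{cons:i}, recall that in any deterministic realization $i$ is covered iff at least one bus line $j$ with $S_j \ni i$ is opened; hence by the union bound $\Pr[Y_i=1] \le \sum_{j:\,S_j \ni i} \Pr[X_j=1]$, and trivially $\Pr[Y_i=1]\le 1$, yielding $y_i \le \min\bigl(1, \sum_{j:S_j \ni i} x_j\bigr)$. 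Constraint~\eqref{cons:e} follows because $x_j,y_i$ are probabilities. This is the only nontrivial step, and I expect the union-bound inequality in~\eqref{cons:i} to be the main (though mild) obstacle to flag, since it is precisely where the LP becomes a \emph{relaxation} rather than an exact encoding of a randomized strategy.

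Finally, evaluating the LP objective on $(\x,\y)$ and applying linearity of expectation,
\[
\min_{g \in \cG}\frac{1}{|g|}\sum_{i \in g} y_i = \min_{g \in \cG}\frac{1}{|g|}\sum_{i \in g} \E[Y_i] = \min_{g \in \cG} \E\!\left[\frac{1}{|g|}\sum_{i \in g} Y_i\right],
\]
which is exactly the performance of \OPT. Since $(\x,\y)$ is LP-feasible, the LP optimum is at least this value, completing the proof.
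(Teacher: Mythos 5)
Your proposal is correct and follows essentially the same route as the paper: define the indicators $X_j, Y_i$ under \OPT, set $x_j=\E[X_j]$, $y_i=\E[Y_i]$, verify Constraints~\eqref{cons:b}--\eqref{cons:e} by taking expectations (your union bound is just the expectation of the pointwise inequality $Y_i \le \sum_{j:S_j\ni i} X_j$ used in the paper), and note that the LP objective evaluated at $(\x,\y)$ equals \OPT's performance. No gaps; your explicit linearity-of-expectation step for the objective only spells out what the paper asserts in one line.
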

 \begin{proof}
We can verify that Objective~\eqref{obj-1} captures the exact performance of \OPT (\ie the min expected coverage ratio among all protected groups). To prove our claim, it suffices to show that the strategy $\{x_j, y_i\}$ of \OPT is feasible to all constraints there. For each $j \in J$ and $i \in I$, let $X_j=1$ and $Y_i=1$ indicate $j$ is set to open and $i$ is covered in \OPT, respectively.  Thus, $\E[X_j]=x_j$ and $\E[Y_i]=y_i$ for every $j$
 and $i$. Since \OPT can be viewed as a certain randomization over all feasible deterministic strategies, $\sum_{j \in J} c_j X_j \le B$. Thus, $\E[\sum_{j \in J} c_j X_j] \le B$, which leads to Constraint~\eqref{cons:b}. Note that for each $i \in I$, $Y_i \le 1$ and $Y_i \le \sum_{j: S_j \ni i} X_j$. Taking expectation on both sides, we get Constraint~\eqref{cons:i}. Constraint~\eqref{cons:e} is valid since $\{x_i, y_j\}$ are all probability values.
  \end{proof}

\let\oldnl\nl
\newcommand{\nonl}{\renewcommand{\nl}{\let\nl\oldnl}}
\begin{algorithm*}[ht!] 
\DontPrintSemicolon
\nonl \textbf{Input}: An input instance of Equity Promotion in Public Transportation (\epp): $\{I,J,B, \{c_j, S_j| j \in J\}, \cG=\{g\}\}$.\;
\nonl \textbf{Output}: A \emph{randomized} budget-allocation strategy denoted by a binary vector $\X \in \{0,1\}^{|J|}$ with $X_j=1$ indicating to open bus line $j \in J$ and $X_j=0$ otherwise.\;
  \SetAlgoLined 
 Solve \LP~\eqref{obj-1} and let $\x^*=\{x^*_j|j \in J\}$ be part of the optimal solution. Set $\X=\x^*$, \ie $X_j=x_j^*$ for all $j \in J$.\;
\While{1}{
\uIf{there exists two fractional values in $\X$, say, $0<X_p, X_q<1$,}
{compute $\alp:=\max\{\ep>0: X_p+\ep \le 1, X_q-c_p \cdot \ep/c_q \ge 0\}$; $\beta:=\max\{\ep>0: X_p-\ep \ge 0, X_q+c_p \cdot \ep/c_q \le 1\}$;\;
with probability $\beta/(\alp+\beta)$, update $X_p \gets X_p+\alp$, $X_q \gets X_q-c_p \cdot \alp/c_q$;\;
with probability $\alp/(\alp+\beta)$, update $X_p \gets X_p-\beta$, $X_q \gets X_q+c_p \cdot \beta/c_q$.}
\uElseIf{there exists one single fractional value $0<X_j<1$ in $\X$,}
{with probability $X_j$, set $X_j \gets 1$; and with probability $1-X_j$, set $X_j \gets 0$.}
\Else{\tbf{Break}.}
}
 \caption{A Randomized Allocation Strategy (\rdr) for Equity Promotion in Public Transportation.} \label{alg:rdr}
\end{algorithm*}

Based on an optimal solution to \LP~\eqref{obj-1}, we design a strategy, denoted by \rdr (Algorithm~\ref{alg:rdr}), which is  built on randomized dependent rounding. Let $\x^*=(x_j^*)$ be part of the optimal solution of \LP~\eqref{obj-1}. The main idea is to apply a series of rounding procedures to transform $\x^*$ to a random binary vector $\X^* \in \{0,1\}^{|J|}$ such that the expected total cost $\sum_{j \in J} c_j \E[X^*_j]$ is as small as possible (ideally no more than $B$), while each household $i \in I$ can get covered with a probability as large as possible.  


The overall picture of our rounding procedure is as follows. During each step, we  identify two remaining fractional values in $\x^*$ if any, and ``twist'' the two values together following one of the two directions randomly, and in each direction, one fractional value will be rounded up and the other will be down. We carefully design the rounding procedure such that it has Properties (\tbf{P1}), (\tbf{P2}), (\tbf{P3}), and  (\tbf{P4}) as outlined in Lemma~\ref{lem:pr}, which is vital to the performance of the randomized strategy $\rdr$ in Algorithm~\ref{alg:rdr}. Generally, (\tbf{P1}) suggests that at least one fractional value in $\x^*$ will be rounded in each step and thus, $\rdr$ will terminate after at most $|J|$ steps.\footnote{More precisely, \rdr will stop after  at most $\lceil |J|/2 \rceil$ steps.}; (\tbf{P2}) says that the expectation on each variable remains the same throughout the rounding process;  (\tbf{P3}) implies that the total budget on the final rounded solution could get an overflow by at most $1$; and   (\tbf{P4}) indicates \emph{negative} correlation over any subset of variables.


\begin{lemma}\label{lem:pr}
\rdr in Algorithm~\ref{alg:rdr} satisfies the following properties in each rounding step:  (\tbf{P1}) At least one fractional value is rounded to either 0 or 1; (\tbf{P2}) The expectation on each value keeps invariant after rounding; (\tbf{P3}) The total cost remains the same if two fractional values are involved and will get increased  by at most $1$ if only one value gets involved; (\tbf{P4}) For any subset $\cS \subseteq J$, $\E[\prod_{j \in \cS} (1-X_j)]$ will never get increased.
\end{lemma}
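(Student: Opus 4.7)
The plan is to verify the four properties separately, with P1 following immediately from the definitions of $\alpha,\beta$, P2 and P3 from direct expectation computations, and P4 from a short case analysis whose main case reduces to showing that a certain quadratic expression is non-positive.

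For \textbf{P1}, I would unfold the definitions of $\alpha$ and $\beta$. In direction one, $\alpha$ is chosen as the largest $\epsilon$ such that $X_p+\epsilon\le 1$ and $X_q-(c_p/c_q)\epsilon\ge 0$, so at least one of these two constraints becomes tight after the update: either $X_p$ is rounded up to $1$ or $X_q$ is rounded down to $0$. The symmetric statement holds for direction two with $\beta$. In the single-fractional-value branch, the lone variable is rounded to $\{0,1\}$ deterministically.

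For \textbf{P2}, I would compute $\E[\Delta X_p]$ and $\E[\Delta X_q]$ directly. In the two-variable branch, $\E[\Delta X_p]=\tfrac{\beta}{\alpha+\beta}\alpha+\tfrac{\alpha}{\alpha+\beta}(-\beta)=0$, and an analogous calculation using the proportional update $\pm (c_p/c_q)\epsilon$ gives $\E[\Delta X_q]=0$. The single-variable branch is immediate. For \textbf{P3}, the key observation is that both updates were designed so that the pair $(+\alpha,-(c_p/c_q)\alpha)$ and $(-\beta,+(c_p/c_q)\beta)$ produce a change in weighted cost of exactly $c_p\alpha-c_q\cdot(c_p/c_q)\alpha=0$ and $-c_p\beta+c_q\cdot(c_p/c_q)\beta=0$, so the two-variable step preserves $\sum_j c_j X_j$ exactly (deterministically, not merely in expectation). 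In the single-variable branch, rounding $X_j$ to $\{0,1\}$ can change the weighted cost by at most $c_j(1-X_j)\le c_j\le 1$ in the worst case; since this branch is executed at most once (as soon as $\X$ has at most one fractional coordinate the algorithm either performs this rounding or halts), the total overflow across the run is at most $1$.

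The main work is \textbf{P4}. I would split by how many of $p,q$ lie in $\cS$. If neither lies in $\cS$, $\prod_{j\in \cS}(1-X_j)$ is untouched. If exactly one, say $p\in \cS$ and $q\notin \cS$, then writing $A=\prod_{j\in \cS\setminus\{p\}}(1-X_j)$ (which is unchanged by the step), the expected value of $(1-X_p)A$ after the step equals $\tfrac{\beta}{\alpha+\beta}(1-X_p-\alpha)A+\tfrac{\alpha}{\alpha+\beta}(1-X_p+\beta)A=(1-X_p)A$, i.e.\ it is preserved. The crux is the case $p,q\in \cS$: let $u=1-X_p$, $v=1-X_q$, $\gamma=c_p/c_q$, and $A=\prod_{j\in\cS\setminus\{p,q\}}(1-X_j)\ge 0$. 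Expanding,
\begin{align*}
&\tfrac{\beta}{\alpha+\beta}(u-\alpha)(v+\gamma\alpha)+\tfrac{\alpha}{\alpha+\beta}(u+\beta)(v-\gamma\beta) \\
&\qquad = uv-\alpha\beta\gamma,
\end{align*}
after the linear cross-terms in $u,v$ cancel. Multiplying by $A\ge 0$ yields $\E[\prod_{j\in \cS}(1-X_j)]=uvA-\alpha\beta\gamma A\le uvA$, i.e.\ the product can only decrease (strictly, when both $\alpha,\beta>0$). Finally, in the single-variable branch with $j\in\cS$, $\E[1-X_j^{\text{new}}]=X_j\cdot 0+(1-X_j)\cdot 1=1-X_j$, so the product is again preserved.

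The step I expect to be delicate is the cross-term cancellation in the two-coordinate case of P4: one must choose the probability weights $\beta/(\alpha+\beta)$ and $\alpha/(\alpha+\beta)$ precisely so that (a) the marginals are preserved (for P2) and (b) the linear-in-$(u,v)$ terms vanish in the expansion, leaving the sign-definite remainder $-\alpha\beta\gamma\le 0$. This is exactly the ``weighted'' twist over the unweighted dependent-rounding scheme of \citet{gandhi2006dependent} highlighted in the technical-challenges discussion, so verifying that these two desiderata hold simultaneously for the specific step sizes chosen in Algorithm~\ref{alg:rdr} is the heart of the argument.
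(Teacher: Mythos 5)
Your proposal is correct and follows essentially the same route as the paper: P1--P3 by unfolding the definitions of $\alpha,\beta$ and direct (conditional) expectation and cost computations, and P4 by the three-way case split on $|\cS\cap\{p,q\}|$, where the two-coordinate case reduces to the expansion yielding $(1-X_p)(1-X_q)-\tfrac{c_p}{c_q}\alpha\beta\le(1-X_p)(1-X_q)$, exactly as in the paper's appendix. Your identified "delicate step" (the cancellation of the linear cross-terms under the weights $\beta/(\alpha+\beta)$ and $\alpha/(\alpha+\beta)$) is indeed the heart of the paper's argument, so no gap remains.
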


We present the proof of  (\tbf{P1}), (\tbf{P2}), and (\tbf{P3}) here and leave that of (\tbf{P4}) to Appendix.
\begin{proof}
Consider two cases. \tbf{Case 1}: There exists two fractional values in $\X$, say $0<X_p,X_q<1$. Following the definition of $\alp$ and $\beta$, we see that the two updating procedures,  $X_p \gets X_p+\alp$, $X_q \gets X_q-c_p \cdot \alp/c_q$ and $X_p \gets X_p-\beta$, $X_q \gets X_q+c_p \cdot \beta/c_q$, both will end up with at least one integral value on either $X_p$ or $X_q$. Let $X'_p$ and $X'_q$ be the rounded value in the end. We see
\begin{align*}
\E[X'_p|X_p, X_q]&=(X_p+\alp)\cdot \frac{\beta}{\alp+\beta}+ (X_p-\beta) \cdot \frac{\alp}{\alp+\beta}=X_p;\\ 
\E[X'_q|X_p, X_q]&=\big(X_q -\frac{c_p\alp}{c_q}\big)\cdot \frac{\beta}{\alp+\beta}+ \big(X_q+\frac{c_p \beta}{c_q}\big)\cdot \frac{\alp}{\alp+\beta}\\
&=X_q. 
\end{align*}
Thus, we conclude that the expectations of the two values both remain unchanged after rounding. Observe that for each updating procedure, the total cost remains the same:
\begin{align*}
&(X_p+\alp) \cdot c_p+(X_q -\frac{c_p\alp}{c_q}\big) \cdot c_q=X_p \cdot c_p+X_q\cdot c_q;\\
&(X_p-\beta) \cdot c_p+(X_q +\frac{c_p\beta}{c_q}\big) \cdot c_q=X_p \cdot c_p+X_q\cdot c_q.
\end{align*}

Now consider \tbf{Case 2}: There exists one single fractional values in $\X$, say $0<X_j<1$.
Let $X'_j$ be the value after being rounded. According to our procedure, we see that $X'_j \in \{0,1\}$. Furthermore, $\E[X'_j|X_j]=X_j \cdot 1=X_j$. Note that in this case the total cost will get increased by at most $c_j \le 1$ (recalled that we assume WLOG that $c_j \le 1$ for all $j \in J$).  
\end{proof}

We are ready to prove the main Theorem~\ref{thm:main-1}. For ease of exposition, we restate Theorem~\ref{thm:main-1} in the lemma below.

\begin{lemma}\label{lem:bug}
For the randomized strategy $\rdr$, (1) it uses a total budget no more than $B$ in expectation; (2) it uses a total budget no more than $B+1$ for any realization; and (3) it achieves an approx-ratio at least of $1-1/\sfe$.
\end{lemma}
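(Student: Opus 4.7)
The plan is to address the three parts in order, with parts (1) and (2) following almost directly from Lemma~\ref{lem:pr} and part (3) being the main event. For (1), observe that the initial vector $\x^*$ satisfies Constraint~\eqref{cons:b}, so $\sum_{j \in J} c_j x_j^* \le B$. Property (\tbf{P2}) guarantees $\E[X_j^*] = x_j^*$ for every $j$ after all rounding steps, so by linearity of expectation $\E\bigl[\sum_{j \in J} c_j X_j^*\bigr] = \sum_{j \in J} c_j x_j^* \le B$. For (2), property (\tbf{P3}) says the total cost is preserved exactly whenever two fractional coordinates are rounded together; the cost can increase only in the final single-variable step of Algorithm~\ref{alg:rdr}, and even then by at most $1$. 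Since that step is executed at most once (it terminates the while-loop), the total cost of any realization is at most $B + 1$.

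For part (3), I would bound the coverage probability of a fixed household $i \in I$. Let $\nei(i) := \{j \in J : S_j \ni i\}$ and $s_i := \sum_{j \in \nei(i)} x_j^*$, where $\x^*$ is the LP-optimal part used to seed \rdr. Then
\[
\Pr[Y_i = 1] \;=\; 1 - \Pr\bigl[X_j^* = 0,\ \forall j \in \nei(i)\bigr] \;=\; 1 - \E\Bigl[\prod_{j \in \nei(i)} (1 - X_j^*)\Bigr].
\]
Applying property (\tbf{P4}) with $\cS = \nei(i)$ to each round, the quantity $\E[\prod_{j \in \nei(i)}(1 - X_j^*)]$ is non-increasing throughout the execution, so it is upper bounded by its value at initialization, namely $\prod_{j \in \nei(i)}(1 - x_j^*)$. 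Combined with the standard inequality $1 - x \le e^{-x}$ for $x \in [0,1]$, this yields
\[
\Pr[Y_i = 1] \;\ge\; 1 - \prod_{j \in \nei(i)}(1 - x_j^*) \;\ge\; 1 - e^{-s_i}.
\]

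Next I would relate this to the LP variable $y_i^*$ via a two-case analysis based on Constraint~\eqref{cons:i}, which forces $y_i^* \le \min(1, s_i)$. If $s_i \ge 1$, then $y_i^* \le 1$ and $1 - e^{-s_i} \ge 1 - 1/\sfe \ge (1-1/\sfe)\, y_i^*$. If $s_i < 1$, then $y_i^* \le s_i$, and by concavity of $1 - e^{-s}$ on $[0,1]$ the graph lies above the chord joining $(0,0)$ and $(1, 1 - 1/\sfe)$, giving $1 - e^{-s_i} \ge (1 - 1/\sfe)\, s_i \ge (1 - 1/\sfe)\, y_i^*$. Either way, $\Pr[Y_i = 1] \ge (1 - 1/\sfe)\, y_i^*$.

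Finally, I would aggregate: for each protected group $g \in \cG$, linearity of expectation gives $\E\bigl[\sum_{i \in g} Y_i / |g|\bigr] \ge (1 - 1/\sfe) \sum_{i \in g} y_i^*/|g|$. Taking minima over $g \in \cG$ on both sides and invoking Lemma~\ref{lem:lp} (the LP objective upper bounds \OPT) finishes the proof. The main technical obstacle is really property (\tbf{P4}) itself, whose proof is deferred to the Appendix; conditional on it, the chain ``negative correlation $\Rightarrow$ product bound $\Rightarrow$ exponential bound $\Rightarrow$ $(1 - 1/\sfe)$-factor via concavity'' is standard, and the only care needed is in handling the two cases separated by whether the LP cap $y_i^* \le 1$ or $y_i^* \le s_i$ is tight.
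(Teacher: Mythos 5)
Your proposal is correct and follows essentially the same route as the paper's proof: (P2) plus linearity for the expected budget, the at-most-once single-variable step for the $B+1$ bound, and (P4) together with $1-x\le \sfe^{-x}$ and the two-case comparison against $y_i^*\le\min(1,s_i)$ for the $1-1/\sfe$ factor, finished by Lemma~\ref{lem:lp}. The only cosmetic difference is that you justify $1-\sfe^{-s}\ge(1-1/\sfe)s$ on $[0,1]$ by concavity and the chord, whereas the paper notes that $(1-\sfe^{-x})/x$ is decreasing; these are equivalent.
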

\begin{proof}
Let $\{x^*_j, y_i^*\}$ be an optimal solution to LP~\eqref{obj-1}. We prove Claim (1) first. Let $\X^*=(X_j^*)$ the final rounded binary vector output by \rdr. Property (\tbf{P2}) in Lemma~\ref{lem:pr} suggests that $\E[X_j^*]=x_j^*$ for every $j \in J$ since we initialize $\X=\x^*=(x_j^*)$. Thus, the expected total cost on $\X^*$ should satisfy $
\E[\sum_{j \in J} c_j X_j^*]=\sum_{j \in J}c_j x_j^* \le B$, where the last inequality is due to Constraint~\eqref{cons:b}. For Claim (2): Note that the case when there is only one fractional value involved in the rounding could happen at most once in $\rdr$, which implies that the total cost can get inflated by at most $1$.

Now we show Claim (3). For each $i \in I$, let $Y_i^*=1$ indicate that $i$ is covered in the final strategy $\X^*$ and $Y_i^*=0$ otherwise. Let $\cS_i=\{j: S_j\ni i\}$, which denotes the set of indices of bus lines that cover $i$. We can verify that $Y_i^*=1-\prod_{j \in \cS_i}(1-X_j^*)$, and  $\E[\prod_{j \in \cS_i}(1-X_j^*)] \le \prod_{j \in \cS_i}(1-x_j^*)$ (by repeatedly applying (\tbf{P4}) in Lemma~\ref{lem:pr}). Thus, 
\begin{align}
\E[Y_i^*]&=1-\E\bb{\prod_{j \in \cS_i}(1-X_j^*)} \nonumber \\
&\ge 1-\prod_{j: j \in \cS_i}(1-x_j^*)~~ \big(\mbox{(\tbf{P4}) in Lemma~\ref{lem:pr}}\big)  \nonumber\\
& \ge 1-\prod_{j: j \in \cS_i} \sfe^{-x_j^*}=1- \sfe^{-\sum_{j \in \cS_i}x_j^*}. \label{ineq:ys}
\end{align}

Note that by Constraint~\eqref{cons:i}, $y_i^* \le \min \big(1, \sum_{j \in \cS_i}x^*_j\big)$.
Consider these two cases.  \tbf{Case (1)}: $\sum_{j \in \cS_i}x_j^* \ge 1$. Then 
\[
\E[Y_i^*] \ge 1- \sfe^{-\sum_{j \in \cS_i}x_j^*}  \ge 1-1/\sfe \ge (1-1/\sfe) \cdot y_i^*,
\]
which is due to Inequality~\eqref{ineq:ys} and the fact $y_i^* \le 1$. \tbf{Case (2)}: $\sum_{j \in \cS_i}x_j^* \le 1$. In this case,  $y_i^* \le \sum_{j \in \cS_i}x_j^* \le 1$. Observe that
\[
\E[Y_i^*] \ge 1- \sfe^{-\sum_{j \in \cS_i}x_j^*} \ge (1-1/\sfe) \cdot \bp{\sum_{j \in \cS_i}x_j^*} \ge  (1-1/\sfe) \cdot y_i^*,
\]
where the second inequality follows from the fact that function $(1-\sfe^{-x})/x$ is decreasing  over $x \in (0,1]$. Summarizing the two cases above, we conclude that $ \E[Y_i^*]\ge  (1-1/\sfe) \cdot y_i^*$. Note that the (expected) performance of $\rdr$ satisfies
\begin{align*}
\E[\rdr]&=\min_{g \in \cG} \E\bb{\sum_{i \in g} Y_i^*/|g|} \ge (1-1/\sfe) \cdot \min_{g \in \cG} \sum_{i \in g} y_i^* /|g|\\
&=(1-1/\sfe) \cdot \LP\eqref{obj-1} \ge (1-1/\sfe) \cdot \OPT,
\end{align*}
where \LP\eqref{obj-1} and \OPT denote the optimal value of \LP~\eqref{obj-1} and the performance of an optimal randomized strategy, and where the last inequality is due to Lemma~\ref{lem:lp}.
\end{proof}




\section{Experimental Results}
Our experiments involve quite a few datasets that were collected in the city of Chicago. 
For each dataset, we cite it as a reference pointing to the public link. Additionally, we offer more details in Appendix about how we use them in our experiments. All experiments are conducted on a PC with 2GHz Quad-Core Intel Core i7 processor and 8GB main memory.

\subsection{Experiment Setup}

\begin{figure}[t!]
  \centering
  {\includegraphics[width=0.7\columnwidth]{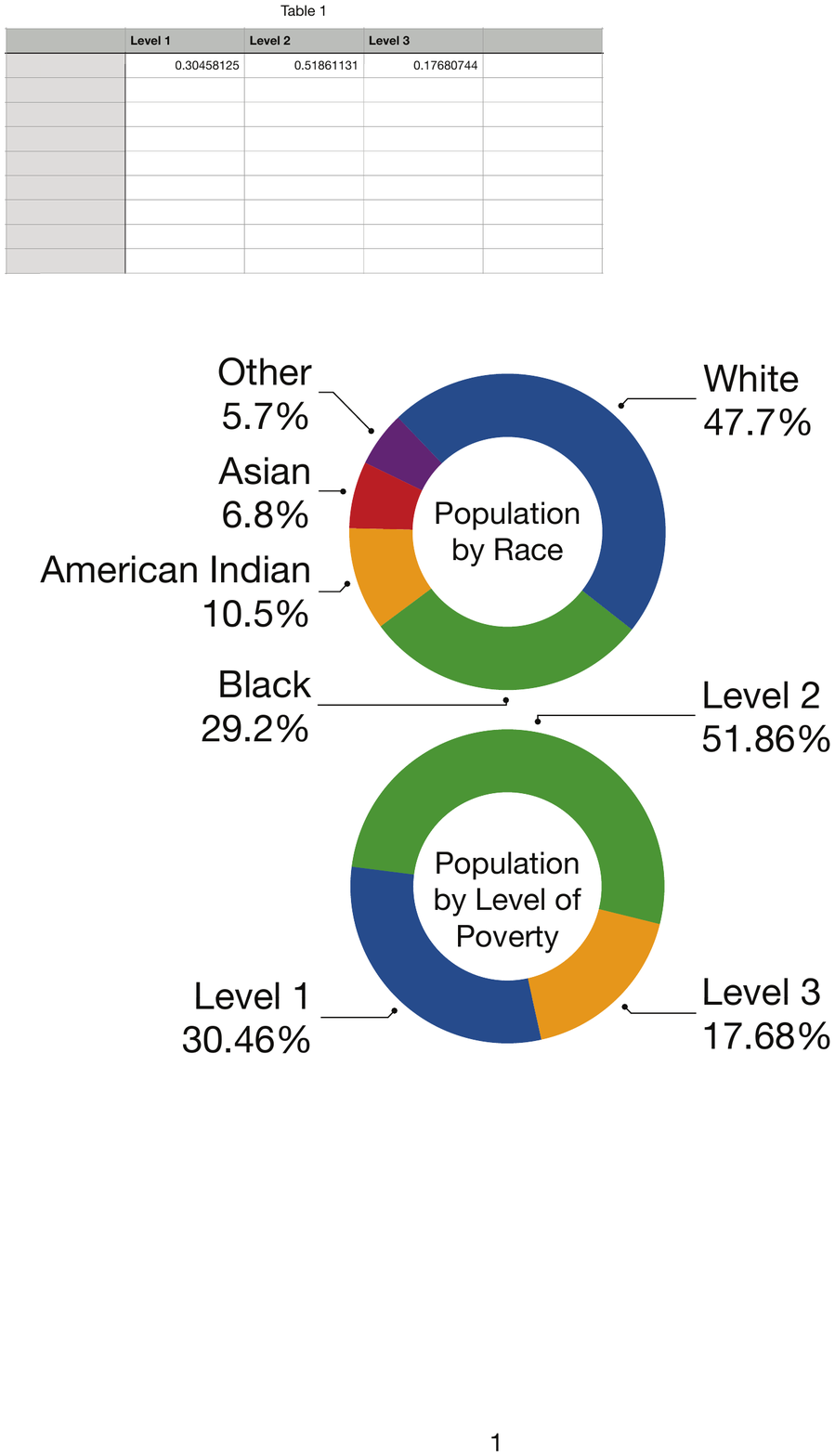}}
     \caption{The household distributions used in our experiments. \textbf{Top}: the distribution of households by races, \ie White, Black, American Indian, Asian, and some other ethic background. \textbf{Bottom}: the distribution of households by poverty levels.}
     \label{fig:household_dist}
\end{figure}

\xhdr{Data preprocessing}. 
We focus on needy households that are far away from the public transits, that is between 0.5 and 3.5 miles away form the nearest train stations and between 0.25 and 3.5 miles away from the nearest bus stations~\cite{degood2016can}. We identify a total number of 17,875 target households based on data from the Chicago Metropolitan Agency for Planning~\cite{data-cmap} and the public transportation data by the Chicago Data Portal~\cite{data-chicago}.  For each needy household, we set a poverty level following the guideline based on the income level estimation information provided by the Department of Health and Human Services~\cite{data-hhs}. 
We separate all target households into three groups: the first is at or above 200\% level of the poverty line (Group Level 1), the second group is between 185\% and  200\% of the line (Group Level 2), while the third is at or below 175\% (Group Level 3). A recent survey showed that the average cost for a ride-hailing trip is about \$25.37~\cite{averagecostperride}, and accordingly, we set the subsidy in the ride-hailing program as \$10, \$15, and \$20 per ride for Group Level 1,  2, and 3, respectively. We assign each household a random race following the distribution recorded by the official US Census in Chicago, 2022~\cite{data-wpr}; see Figure~\ref{fig:household_dist} (Top).

The set of candidate bus routes is generated as follows.  We cluster all needy households with the travel distance no more than 0.25 miles and then set a bus stop right in the center. After filtering out those impractical stations that are more than 3.5 miles away from any others, we have 649 candidate bus stops in total. Recall that each candidate bus line is for connecting residents with the nearest public transits. Thus, each route should end at either a metro station or a bus stop. Following this principle, we create 20 candidate bus routes with proper length, that is with 10-18 stops and no more than 0.75 miles between any two stops. For each bus route, we set the operating expense per vehicle revenue hour to \$140 based on information offered by the Chicago Transit Authority~\cite{data-cta}. 
We create two schedules for each bus route, one with the full-day and the other with the half-day working schedule.\footnote{We assume that the half-day working schedule can serve only half of the households on the route.}

\begin{figure*}[t!]
  \centering
  \begin{subfigure}[b]{0.32\linewidth}
    \includegraphics[width=\linewidth]{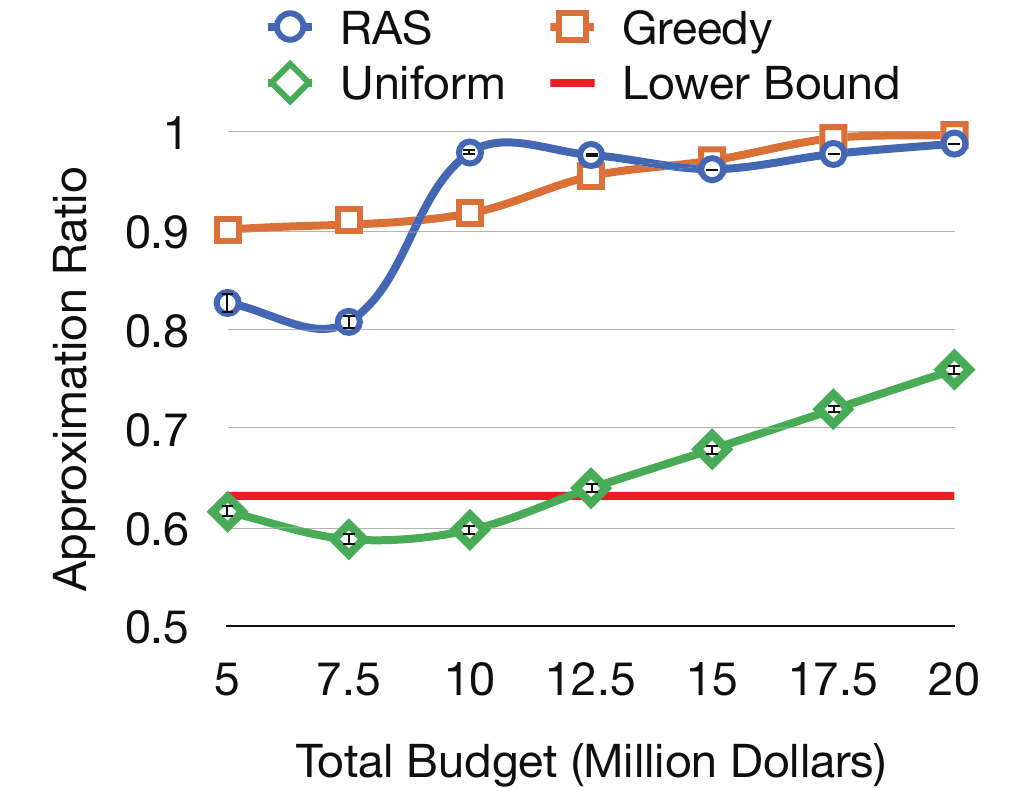}
    \caption{Approximation ratios with the bus-line-based program only.}
    \label{fig:ar_n}
  \end{subfigure}
  \hspace{1mm}
  \begin{subfigure}[b]{0.32\linewidth}
    \includegraphics[width=\linewidth]{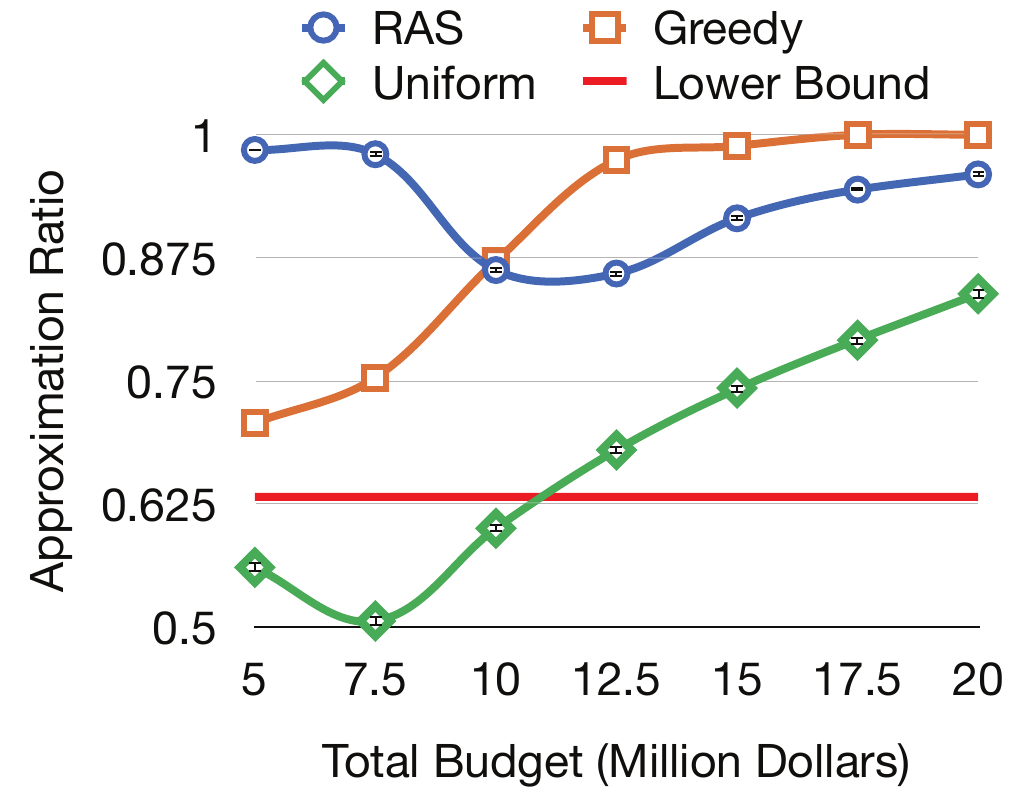}
    \caption{Approximation ratios when the ride-hailing-based program added.}
    \label{fig:ar_w}
  \end{subfigure}
  \hspace{1mm}
  \begin{subfigure}[b]{0.32\linewidth}
    \includegraphics[width=\linewidth]{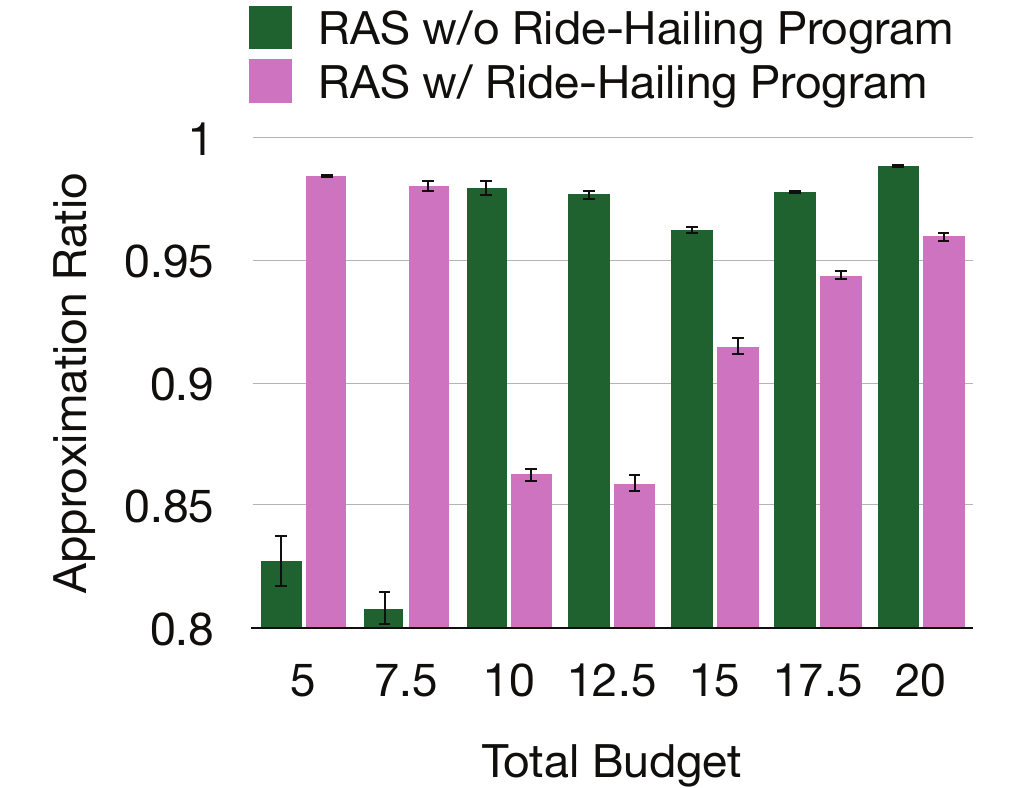}
    \caption{Comparison between before and after adding the ride-hailing-based program.}
    \label{fig:ar_wn}
  \end{subfigure}
  \caption{Experimental results on real datasets collected in Chicago: The total number of budgets $B$ takes values from $\{5,7.5,..., 20\}$ (million dollars).}
  \label{fig:ar_results}
\end{figure*}

\xhdr{Algorithms}. Suppose we have a total budget $B$ and a collection $J$ of candidate bus lines (which could potentially include those one-one virtual bus lines representing to cover a household via the ride-hailing program in case the latter is considered). In addition to the LP-based algorithm proposed in this paper (\rdr), we implement two baselines as follows. (a) \sgreedy: Always select the bus line $j \in J$ that maximizes the marginal gain of equity based on the current budget allocation (break ties arbitrarily), and repeat this procedure until the budget $B$ is exhausted or all target households are covered; (b) \srandom: Select a bus line from $J$ uniformly at random once at a time, and repeat this procedure until the budget $B$ is exhausted or all target households are covered;
As for randomized algorithms of \rdr and \srandom, we run each 1000 times and take the average as the final performance. We compare the performance of each algorithm against the optimal value to the benchmark~\LP~\eqref{obj-1} and compute that ratio as the final approximation ratio achieved. We also compute the 95\% confidence interval for each algorithm's performance to show the potential robustness.

\xhdr{Computational complexity of \rdr}. The running time of \rdr consists of two parts: Solving \LP~\eqref{obj-1} and the rounding procedure. Theoretically, the running time to solve~\LP~\eqref{obj-1} can be as low as $O^*(N^{2 + 1/6}\log(N / \delta))$~\citep{Cohen2019SolvingLP}, where $\delta$ is the relative accuracy and $N=|I|+|J|$ with $I$ and $J$ being the total number of households and bus lines, respectively. As for the rounding procedure, \rdr is guaranteed to eliminate at least one fractional value from the vector $\X$ of size $|J|$ each round and thus, it takes $O(|J|)$ time.  Therefore, theoretically, the dominant part of the running time is to solve the benchmark LP~\eqref{obj-1}.

\subsection{Results and Discussions} \label{sec:exp-res}
We run our experiments in two cases: the first is with bus-lines-based program only; the second is to add ride-hailing program and consider the two programs together. In our experiments, we aim to make a plan for one quarter with a total of budget of $B \in \{5,7.5, 10, \ldots, 20\}$ (million dollars).


Figure~\ref{fig:ar_n} shows results in the first case when the bus-lines-based program is the only choice. The approximation ratios of \rdr always stay above the theoretical lower bound of $1-1/\sfe \sim 0.632$ (plotted in red line) as shown in Theorem~\ref{thm:main-1}. As the total budget increases, all three algorithms' approximation ratios are increasing and approaching 1. This can be seen as follows: the more budget we have, the more capability each algorithm has to cover as many households as possible in each protected group, which leads to a higher equity as a result.  
\sgreedy can outperform  \rdr at some larger budgets since in that case, there is less need in optimizing the budget allocation among different approaches as \rdr is designed for.  Note that \sgreedy does not have any theoretical guarantees, which can perform arbitrary bad on some worst-scenario instance (since \sgreedy is deterministic, and as Example~\ref{exam:a} shows, any deterministic achieves an approximation ratio of zero).  

Figure~\ref{fig:ar_w} shows the results in the second case when both bus-lines-based and ride-hailing-based programs are considered. Our algorithm \rdr dominates the other two baselines when the total budget is capped by 10 million dollars. This highlights the superiority of the LP-based rounding algorithm when budget is limited, which is commonly observed in reality. As reported in~\cite{farcry}, ``the needed investment for public transit in U.S.\ is a far cry from the current numbers that have finally been passed.'' 

Figure~\ref{fig:ar_wn} demonstrates the high potentials of the ride-hailing program in promoting equity among protected groups when budget is relatively small. In the case of scarcity of public funds, the approach of covering needy households via ride-hailing trips prove to be far more efficient than the traditional one through opening new bus lines.  This is due to the fact that the ride-hailing-based approach enjoys more flexibility and a higher utilization rate of budget when the funds are insufficient. When budget becomes more abundant, it seems a better choice to consider the traditional approach only since the advantage of ride-hailing trips diminishes. 



\section{Conclusion}
In this paper, we propose  a theoretical model to promote equity in public transportation by optimizing budget allocation over  different approaches of improving the access to public transits for needy households. We design an LP-based rounding algorithm and prove that it achieves an optimal $1-1/\sfe$ approximation ratio. Additionally, we test our algorithm against a few natural baselines on real datasets, and experimental results confirm our theoretical predictions and highlight the effectiveness in promoting the social equity among households with low socioeconomic status.

Our work opens a few new directions. The first is to introduce more constraints to the model reflecting practical restrictions in the real world to make our model one step closer to reality. One example is to consider adding area-based capacity to the ride-hailing program, which is due to the limited availability of ride-hailing drivers in that area. We expect any newly added constraints could bring significant technical challenges in algorithmic design and analysis. The second is to parallelize the current algorithm such that it can be implemented and deployed more efficiently.

\section*{Acknowledgments}
Anik Pramanik and Pan Xu were partially supported by NSF CRII Award IIS-1948157. Pan Xu would like to thank Hui Kong and Xinyue Ye for stimulating discussions. 
The authors would like to thank the anonymous reviewers for their helpful feedback.
 \newpage

\bibliography{stable_ref}

 \onecolumn
 \appendix
\section{Proof of Property (\tbf{P4}) in Lemma~\ref{lem:pr}}
\begin{proof}
Consider a fixed set $\cS \subseteq J$ and a given rounding step. Let $\X$ and $\X'$ be the vector before and after the rounding step.  We split our discussion into the following cases. 

\tbf{Case (a)}: There is only one index of fractional value, say $p \in \cS$, which is selected into the rounding process. Then we have  
\begin{align*}
&\E\bb{\prod_{j \in \cS, j\neq p} (1-X_j) \cdot(1-X'_p)~~\big| ~~\X}  =\prod_{j \in \cS, j \neq p} \big(1-X_j\big) \cdot \E\big[(1-X_p') ~~\big| ~~ \X\big]=\prod_{j \in \cS, j \neq p} \big(1-X_j\big) \cdot (1-X_p)=\prod_{j \in \cS} \big(1-X_j\big).
\end{align*}

\tbf{Case (b)}: There are two indices of selected fractional values, say $p, q \in \cS$. Let $H:=\prod_{j \in (\cS-\{p,q\}) } \big(1-X_j\big)$.
\begin{align*}
\E\Big[\prod_{j \in \cS} (1-X'_j) ~~\big| ~~ \X \Big]& =H \cdot \E\Big[ (1-X'_p)\cdot  (1-X'_q)  ~~\big| ~~\X\Big]\\
&=H \cdot \bB{ \frac{\beta}{\alp+\beta}  \Big(1-X_p-\alp\Big) \cdot\Big(1-X_q+\frac{c_p \alp}{c_q}\Big)  +\frac{\alp}{\alp+\beta}  \Big(1-X_p+\beta\Big) \cdot\Big(1-X_q-\frac{c_p \beta}{c_q}\Big) }\\
&=H \cdot \bB{  \Big(1-X_p\Big) \cdot \Big(1-X_q\Big)-\frac{c_p}{c_q}\cdot (\alp \cdot\beta)} \le H \cdot  \Big(1-X_p\Big) \cdot \Big(1-X_q\Big)=\prod_{j \in \cS} \Big(1-X_j\Big).
\end{align*}

\tbf{Case (c)}: No index of any fractional values is selected in $\cS$. We observe that $X'_j=X_j$ for all $j \in \cS$.  Thus, we establish the claim that for any subset $\cS \subseteq J$, $\E[\prod_{j \in \cS}(1-X_j)]$ will never get increased.
\end{proof}

\section{Description of Datasets Relevant to Our Experiments}

\subsection{CMAP Data: \url{https://tinyurl.com/2cvwk4p5}}
The Chicago Metropolitan Agency for Planning (CMAP) conducted a comprehensive travel and activity survey for northeastern Illinois residents between August 2018 and April 2019 with a total of 12,391 households participating in the survey~\cite{data-cmap}. The survey contained detailed information about the household sizes, household income, household location coordinates, census county, vehicle ownership, trips frequency, trip destinations and other sociology-economic and travel inventory of each household. In this paper, we use the household location coordinates to calculate their distance from nearby transit hubs and locate the households that are deprived of public transportation.

\subsection{Household Income Data: \url{https://tinyurl.com/2p9mcnxy}}
The Department of Health and Human Services (HHS) has provided guidelines denoting the federal poverty thresholds using income level and household size~\cite{data-hhs}. Table 3 in the guidelines shows that part of the 2021 HHS poverty guidelines which are published in the Federal Register. Also in the Table, the column labels indicate different levels of poverty decreasing from left to right. The column with 100\% is the most poverty-stricken group and the 200\% column is the least poverty-stricken among the list. According to the Illinois Department of Human Services, 200\% poverty level is considered the base poverty line. In other words, income for a household must be at 200\% or below of the Federal Poverty Level (FPL) for the family to be eligible as poor.

\subsection{Operating Expense for Bus Lines: \url{https://tinyurl.com/mvkwznf6}}
Chicago Transit Authority (CTA) publishes yearly budget recommendations~\cite{data-cta} every year that reports the operating expenses, system revenue, funding, comparative statistics of other cities, future improvement and expenses, and other aspects of Chicago transportation. From the 2019 Budget Recommendations report of Tables ``Operating expense per vehicle revenue mile'' and ``Operating expense per vehicle revenue hour'', we can see that that the average cost to operate one vehicle per revenue mile is \$15.32 and the average cost to operate one vehicle per revenue hour is \$140.91.

	\end{document}